\setlist{nosep}
\newtheorem{theorem}{Theorem}
\newtheorem{definition}{Definition}
\tikzstyle{model} = [rectangle, rounded corners, minimum width=1.1cm, minimum height=0.8cm,text centered, draw=black, fill=red!30]
\tikzstyle{vector} = [rectangle, minimum width=0.5cm, minimum height=0.8cm,text centered, draw=black, fill=green!30]
\tikzstyle{var} = [circle, rounded corners, minimum width=0.8cm, minimum height=0.8cm,text centered, draw=black, fill=gray!10]
\title{Contrastive representations of high-dimensional, structured treatments}
\author{
Oriol Corcoll\\
  Spotify\\
\texttt{oriolc@spotify.com} \\
  \And
 Athanasios Vlontzos\\
  Spotify\\
  Imperial College London \\
  \texttt{athanasiosv@spotify.com} \\
  \And
  Michael O'Riordan\\
  Spotify\\
  \texttt{moriordan@spotify.com} \\
   \And
   Ciar\'{a}n M. Gilligan-Lee\\
  Spotify \\
  University College London \\
  \texttt{ciaranl@spotify.com}\\
  \texttt{ciaran.lee@ucl.ac.uk} \\
}
\begin{document}

\maketitle

\begin{abstract}
Estimating causal effects is vital for decision making. In standard causal effect estimation, treatments are usually binary- or continuous-valued. However, in many important real-world settings, treatments can be structured, high-dimensional objects, such as text, video, or audio. This provides a challenge to traditional causal effect estimation. While leveraging the shared structure across different treatments can help generalize to unseen treatments at test time, we show in this paper that using such structure blindly can lead to biased causal effect estimation. We address this challenge by devising a novel contrastive approach to learn a representation of the high-dimensional treatments, and prove that it identifies underlying causal factors and discards non-causally relevant factors. We prove that this treatment representation leads to unbiased estimates of the causal effect, and empirically validate and benchmark our results on synthetic and real-world datasets. 
\end{abstract}

\section{Introduction}

Estimating the causal effect of a treatment is crucial for 
actionable decision-making \cite{richens2020improving, vlontzos2023estimating, gilligan2020causing, Pearl2009, jeunen2022disentangling, van2023estimating, corcoll2022did, reynaud2022d, zeitler2023non, o2024spillover, van2023estimating} In standard effect estimation, treatments are usually binary- or continuous-valued. However, in many cases of real-world importance, treatments correspond to complex, structured, high-dimensional objects, such as text, audio, images, graphs, or products in an online market place, to name a few. This setting provides a challenge to traditional causal estimation methods that must be overcome if we are to understand cause and effect in real-world settings. 

While leveraging shared structure across different treatments can help generalize to unseen treatments at test time, and improve data efficiency, we show in this paper that using such structure blindly leads to biased causal effect estimation. Indeed, in most cases, the outcome is actually caused by underlying causal variables corresponding to latent aspects of the complex treatment object we observe---the tone of a text, for instance. The object we use to characterise the treatment can be thought of as a high-dimensional proxy for these underlying causal latent variables. Importantly, the treatment object can also be proxies for other latent variables which do not causally impact the outcome, such as the style of a text. We show that when such non-causal latent variables are correlated with confounding variables in a given setting, then directly using the high-dimensional, structured treatment for causal effect estimation leads to bias---even when all confounders are observed. 

Consider the example of estimating the impact of a product review on sales of that product. Here, the positive or negative tone of the review will likely be the main driver of impact to sales. To estimate the effect, however, all we have access to is the full text of the review. Other latent aspects of the text---such as style---may not impact sales, yet are mixed together with the tone of the message in the text itself. In this paper we show that if these non-causal latents are correlated with any relevant confounders in this context---the writers affinity for the company selling the product, for example---this can lead to bias when directly using the text as the treatment in effect estimation. 


We address this challenge by devising a novel contrastive approach that learns a representation of the high-dimensional treatments which provably identifies the relevant causal latents and discards non-causal ones. We prove that using this representation as the treatment in causal effect estimation leads to an unbiased estimates of causal effects. Having such a \emph{causally} relevant representation for high-dimensional treatments has utility beyond estimating causal effects. Indeed, if we can understand the causal components that cause products in an online marketplace to be purchased, we cloud improve product recommendation in that marketplace. Moreover, if we learn the aspects of drug molecule that cause reduction in the severity of symptoms for a given disease, then we could find drugs with similar causal aspects more efficiently, thus potentially improving drug discovery.
Finally, we validate our results on synthetic and real-world data, and empirically demonstrate that previous work on effect estimation with high-dimensional treatments yield biased causal effects.



In summary, our main contributions are:

\begin{enumerate}
    \item A novel contrastive method to learn a causally-relevant representation of complex, high-dimensional, structured treatments.
    \item A proof that our representation identifies the causal latents and discards the non-causal ones.
    \item A proof that using this representation leads to unbiased causal effect estimation.
    \item An empirical validation of our method on simulated and real-world datasets, where we outperform previous methods for causal effect estimation with high-dimensional treatments.
\end{enumerate}

\section{Background and definitions}
\label{sec:background}

We adopt the Structural Causal Model (SCM) framework as introduced by \cite{Pearl2009}. 

\begin{definition}[Structural Causal Model] \label{functional causal model}
\label{scmdef}
A structural causal model (SCM) specifies a set of latent variables $U=\{u_1,\dots,u_n\}$ distributed as $P(U)$, a set of observable variables $X=\{X_1,\dots, X_m\}$, a directed acyclic graph (DAG) $G$, called the \emph{causal structure} of the model, whose nodes are the variables $U\cup X$, a collection of functions $F=\{f_1,\dots, f_n\}$, such that $X_i = f_i(\text{PA}(X_i), u_i), \text{ for } i=1,\dots, n,$ where $\text{PA}$ denotes the parent observed nodes of an observed variable.
\end{definition}

A (hard) intervention on variable $T$ is denoted by $\text{do}(T=t)$, and it corresponds to removing all incoming edges in the causal graph and replacing its structural equation with a constant. 

The main causal quantity of interest in this work is the conditional average treatment effect (CATE), which corresponding to the change in outcome for different treatments $T,T'$ at covariate value $x$:

$$\tau(T,T',x) := \mathbb{E}\left(Y \mid \text{do}(T), X=x\right) - \mathbb{E}\left(Y \mid \text{do}(T'), X=x\right)$$

When confounders are observed and d-separate treatment and outcome, the CATE can be estimated via \emph{back-door} adjustment \cite{Pearl2009} as follows:

$$\tau(T,T',x) = \mathbb{E}\left(Y \mid T, X=x\right) - \mathbb{E}\left(Y \mid T', X=x\right)$$

\section{The problem}
\label{sec:problem}

\begin{figure}[!ht] 
\centering
    \includegraphics[scale=0.25]{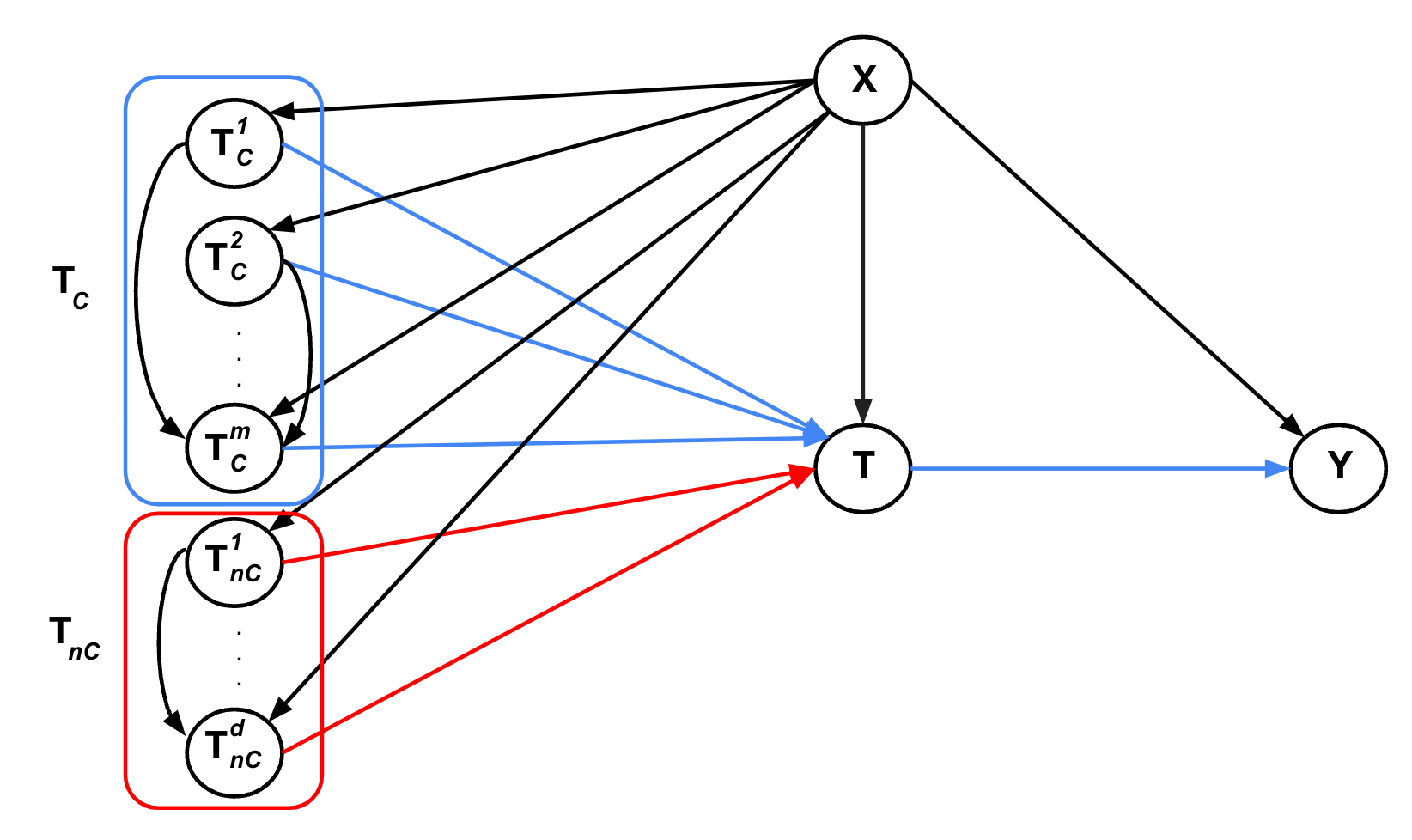}
    \caption{DAG for our problem. While $T$ depends on both $T_C$ and $T_{nC}$ in the structural equations, $T=m(T_C,T_{nC})$ outcome $Y$ only depends on $T$ through $T_C$: $Y=f(T_C, X, \epsilon_Y).$ This is represented graphically by the \emph{blue} arrow from $T_C$ to $T$, and on to $Y$, while the arrow from $T_{nC}$ to $T$ is \emph{red}.}
    \label{fig:DAG}
\end{figure}

In this paper, we consider a setting where the object describing the treatment is generated by some collection of latent variables which can causally interact with one another. These could be, for instance, latent aspects of a piece of text, such as tone or style, a collection features representing a video, or the structure of the bonds in a molecule. We denote the causally relevant latent variables by $T_C=\{T_C^1,\dots,T_C^m\}$ and the non-causally relevant latent variables by $T_{nC}=\{T_{nC}^1,\dots,T_{nC}^d\}$. In the general case, we may not be given direct access to the latents themselves, but some function of them. That is, the treatment we're given for a specific problem, $T$, corresponds to a (potentially non-linear) mixture of these latents, $T=m(T_C, T_{nC})$. The following structural equations categorise the causal relationships between the treatment, $T$, the confounders $X$, and the outcome $Y$: $X=l(\epsilon_X), T_C=g(X, \epsilon_{T_C})$, $T_{nC}=h(X, \epsilon_{T_{nC}})$, $T=f(T_C, T_{nC})$ and outcome $Y=f(T_C, X, \epsilon_Y)$, where noise terms are drawn i.i.d. $\epsilon_i\sim P(\epsilon_i)$. The DAG for this setting is shown in Figure~\ref{fig:DAG}.

In this setting, a given $T_C$ is mapped to a set of $T$ values, indexed by the $T_{nC}$ latents: $T_C\rightarrow\mathcal{T}_{T_C}=\{T=f(T_C, T_{nC})\}_{T_{nC}}$. For causal effect estimation using treatment $T$ to be unbiased, we require that the CATE using $T$ must reproduce the correct CATE with $T_C$. That is: 

$$\int\left(\tau(T_C, T_C', X) - \tau(T, T', X) \right)^2 P(X)dX = 0, \quad\forall T\in\mathcal{T}_{T_C} \text{ and } T'\in\mathcal{T}_{T_C'}.$$

The standard approach to estimating the causal effect of a treatment, $T$, on outcome, $Y$, with confounders, $X$, is to estimate $Y$ given $T$ and $X$---known as back-door adjustment \cite{Pearl2009}. We now show that this approach can result in an unbiased estimate of the causal effect when the treatment $T$ is a mixture of causal and non-causal latents.

\begin{theorem}\label{theorem:back door}
Consider treatment $T$ from the DAG of Figure~\ref{fig:DAG}, with structural equations as given at the start of Section~\ref{sec:problem}. Back-door adjustment directly using $T$ leads to biased causal effect estimation. 
\end{theorem}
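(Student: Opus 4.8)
Since the claim is a bias/non\nobreakdash-identifiability statement, the plan is to prove it by exhibiting one explicit SCM that obeys the structural equations of Section~\ref{sec:problem} and for which the back\nobreakdash-door estimand built from $T$ provably disagrees with the target $\tau(T_C,T_C',x)$. I would pick a linear--Gaussian instance so that every conditional expectation is closed\nobreakdash-form and the discrepancy can be written down exactly. The mechanism I will exploit is that the mixing map $f$ in $T=f(T_C,T_{nC})$ loses information about $T_C$: after conditioning on $(T,X)$ the posterior over $T_C$ is non\nobreakdash-degenerate, and because $T_C$ and $T_{nC}$ both feed into $T$, shifting $T$ by one unit moves $\mathbb{E}[T_C\mid T,X]$ by a shrinkage factor strictly below one, so the ``slope in $T$'' under\nobreakdash-reads the causal effect of $T_C$.

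Concretely I would take $X\sim\mathcal{N}(0,\sigma_X^2)$, $T_C=aX+\epsilon_C$, $T_{nC}=bX+\epsilon_{nC}$, $T=T_C+T_{nC}$ and $Y=cT_C+dX+\epsilon_Y$ with independent Gaussian noises of variances $\sigma_C^2,\sigma_{nC}^2>0$ and $\sigma_Y^2$; this matches the templates $X=l(\epsilon_X)$, $T_C=g(X,\epsilon_{T_C})$, $T_{nC}=h(X,\epsilon_{T_{nC}})$, $T=f(T_C,T_{nC})$ and $Y=f(T_C,X,\epsilon_Y)$, with $Y$ depending on $T$ only through $T_C$. \emph{Step 1:} since $\{X\}$ satisfies the back\nobreakdash-door criterion for $(T_C,Y)$ in Figure~\ref{fig:DAG}, the target is $\tau(T_C,T_C',x)=\mathbb{E}[Y\mid T_C,X=x]-\mathbb{E}[Y\mid T_C',X=x]=c\,(T_C-T_C')$. \emph{Step 2:} compute $\mathbb{E}[Y\mid T=t,X=x]$ by the tower rule over $T_C$; the only nontrivial term is $\mathbb{E}[T_C\mid T=t,X=x]$, which by the standard conditional\nobreakdash-Gaussian formula equals $ax+\rho\,(t-(a+b)x)$ with $\rho=\sigma_C^2/(\sigma_C^2+\sigma_{nC}^2)\in(0,1)$, so the back\nobreakdash-door estimand built from $T$ is $\tau(T,T',x)=c\rho\,(t-t')$.

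\emph{Step 3 (conclusion).} As $T_{nC}$ ranges over $\mathbb{R}$ for fixed $T_C$, one has $\mathcal{T}_{T_C}=\mathcal{T}_{T_C'}=\mathbb{R}$; picking representatives $t=T_C+s$ and $t'=T_C'+s$ gives $\tau(T,T',X)-\tau(T_C,T_C',X)=c(\rho-1)(T_C-T_C')$, a nonzero constant whenever $T_C\neq T_C'$ and $\sigma_{nC}^2>0$, hence $\int\big(\tau(T_C,T_C',X)-\tau(T,T',X)\big)^2P(X)\,dX=c^2(1-\rho)^2(T_C-T_C')^2>0$ and the unbiasedness condition of Section~\ref{sec:problem} is violated. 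I would also remark that $\tau(T,T',x)$ is not even well defined as a function of $(T_C,T_C')$---choosing different offsets for $t$ and $t'$ changes its value---and that globally rescaling $T$ cannot repair it, since the required factor $\rho$ depends on the unobserved noise scales and, for general $f$, on $X$.

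I expect the real obstacle to be conceptual rather than computational: arguing that the phenomenon is \emph{generic}---the bias comes from the information loss in $T=f(T_C,T_{nC})$, not from the particular linear\nobreakdash-Gaussian numbers---and contrasting it with the degenerate case in which $f$ is invertible in $T_C$ (there $(T,X)$ pins down $T_C$ and back\nobreakdash-door adjustment is unbiased), so that the counterexample reads as representative of the whole family in Figure~\ref{fig:DAG}. The only delicate bookkeeping is to check that the chosen SCM respects every constraint of the figure---$T$ carries no exogenous noise and no direct dependence on $X$, and $Y\perp T_{nC}\mid T_C,X$---and that the comparison is made against the paper's integrated\nobreakdash-squared\nobreakdash-error notion of unbiasedness rather than a weaker one.
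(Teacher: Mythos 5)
Your proposal is correct, and it proves the theorem by a genuinely different mechanism than the paper. You choose a non-invertible mixing $T=T_C+T_{nC}$, so that $(T,X)$ no longer pins down $T_C$; the \emph{exact} conditional expectation $\mathbb{E}[Y\mid T,X]$ then carries the shrinkage factor $\rho=\sigma_C^2/(\sigma_C^2+\sigma_{nC}^2)<1$, and the back-door contrast $c\rho(t-t')$ provably under-reads the target $c(T_C-T_C')$ for every choice of representatives, which violates the paper's integrated-squared-error criterion (you should state explicitly that $c\neq 0$). The paper instead keeps the mixing invertible ($T$ is the concatenation of $T_C$ and $T_{nC}$) and argues that the regression of $Y$ on $(T,X)$ admits an alternative solution that shifts weight from $X$ onto its proxy $T_{nC}$ (coefficient $\tfrac{\sigma}{2\beta}$ on $T_{nC}$ plus $\tfrac{\sigma}{2}$ on $X$); this model reproduces $\mathbb{E}(Y\mid T_C,X)$ yet gives $\tau(T,T',X)=\tfrac{\sigma}{2\beta}(T_{nC}-T_{nC}')\neq 0$ for two treatments sharing the same $T_C$. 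Each route buys something. Yours is airtight in the sense that even the unique, ideal regressor is biased; but, as you yourself note, the attenuation mechanism evaporates when the mixing is invertible, which is exactly the regime assumed by the paper's later identification theorem, so your counterexample does not support the paper's claim that $T_{nC}$ must be stripped out even when no information is lost. Conversely, the paper's example covers the invertible case, but its key step is shakier than yours: with $\sigma_{T_{nC}}>0$ the population regression of $Y$ on $(T_C,T_{nC},X)$ has a \emph{unique} minimizer, which places coefficient $0$ on $T_{nC}$ and would in fact recover the correct CATE, so the exhibited alternative model is not literally ``a possible solution to regressing $Y$ on $T$ and $X$'' in the $L^2$ sense---it is only predictively near-equivalent because $T_{nC}$ proxies $X$. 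The two proofs are therefore complementary rather than one subsuming the other.
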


\begin{proof}
To show back-door adjustment with $T$ does not suffice for causal effect estimation, we just need to construct at least one example where it fails. Consider the following data generation process:
$$X=\epsilon_X,  \text{ } T_C=\alpha X + \epsilon_{T_C},  \text{ } T_{nC}=\beta X + \epsilon_{T_{nC}},  \text{ } Y=\rho T_C + \delta X +\epsilon_Y, \text{ and } T=\begin{bmatrix}
T_C  \\
T_{nC}
\end{bmatrix}$$
with $\epsilon_i \sim \mathcal{N}(0,\sigma_i)$. This provides us with a joint distribution $P(Y, X, T).$

In order to show backdoor adjustment fails, we need to show that regressing $Y$ onto $T$ and $X$ does not always result in unbiased estimates of the causal effect of $T_C$ on $Y$. That is, we need to show the existence of a model where $\mathbb{E}(Y \mid T, X)$ is equal to $\mathbb{E}(Y \mid T_C, X)$ from the above data generation process, but where $\tau(T, T', X)$ from this model does not equal $\tau(T_C, T_C', X)$ from the data generation process, above, where $T\in\mathcal{T}_{T_C}$ and $T'\in\mathcal{T}_{T_{C}'}$. In this case, $\mathbb{E}(Y \mid T, X)$ does not identify $\mathbb{E}(Y \mid \text{do}(T_C), X)$. 


Consider the following model for $Y$: 
$$Y=\begin{bmatrix}
\alpha & \frac{\sigma}{2\beta} 
\end{bmatrix}\begin{bmatrix}
T_C \\
T_{nC}
\end{bmatrix} + \frac{\sigma}{2}X +\epsilon_Y = \begin{bmatrix}\alpha & \frac{\sigma}{2\beta} 
\end{bmatrix} T + \frac{\sigma}{2}X +\epsilon_Y$$
As $T_{nC}=\beta X + \epsilon_{T_{nC}}$ it follows that the expected values for $Y$ given $T$ and $X$, $\mathbb{E}(Y \mid T, X),$ generated by this model is equivalent to $\mathbb{E}(Y \mid T_C, X)$ from the data generation process, above. As $\mathbb{E}(Y \mid T, X) = \mathbb{E}(Y \mid T_C, X)$, this model is a possible solution to regressing $Y$ on $T$ and $X$. However, when we intervene on $T_{nC}$, we break the relationship between $X$ and $T_{nC}$, which reveals we have not learned the correct causal model. To see this, consider $T = \begin{bmatrix} T_C \\ T_{nC}\end{bmatrix}$ and $T'=\begin{bmatrix} T_C \\ T_{nC}'\end{bmatrix}$ with $T_{nC}\neq T_{nC}'$. Here, $\tau(T_C, T_C, X)=0$, but $\tau(T, T', X)=\frac{\sigma}{2\beta} (T_{nC}-T_{nC}')\neq 0.$
\end{proof}

Why does this happen? Well the $T_{nC}$ are proxies for the confounders, $X$. So using them in our estimation can make it look like we have suitably controlled for the confounders using the $T_{nC}$, but when we intervene on the $T_{nC}$ we break the link between $T_{nC}$ and $X$ and reveal that we have not appropriately controlled for the true confounders.

In the experiments section we investigate this empirically  on both synthetic and real data, and find that back door adjustment of $T$ leads to biased effect estimation.

In order to estimate an unbiased causal effect, we should not directly use the high-dimensional treatment itself. Instead, in the example from Theorem~\ref{theorem:back door}, had we been able to use a representation of $T$ that does not contain any information about the non-causal latents, then the backdoor adjustment with this representation would have identified the correct causal effect. This motivates using a representation of the treatment $\psi(T)$ to estimate causal effects. If we learn a representation of $T$ that doesn't depend on the non-causal latents, it seems intuitive that effect estimation will be unbiased in general. We now prove this is necessary and sufficient. 

\begin{theorem}
Causal effect estimation is unbiased if and only if a representation of $T$ is used that contains no information about the non-causal latents.    
\end{theorem}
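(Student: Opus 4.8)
The plan is to prove both directions of the equivalence. For \emph{sufficiency}, suppose $\psi(T)$ is a representation that carries no information about $T_{nC}$, i.e. $\psi(T)$ is a function of $T_C$ alone (or more precisely $\psi(T) \perp T_{nC}$ in a way that makes $\psi(T)$ measurable with respect to $\sigma(T_C)$ after the mixing $m$ is inverted on the relevant coordinates). First I would show that $Y \perp T_{nC} \mid T_C, X$, which is immediate from the structural equation $Y=f(T_C,X,\epsilon_Y)$ and the independence of the noise terms in the DAG of Figure~\ref{fig:DAG}. Then I would argue that $X$ d-separates $\psi(T)$ from $Y$ except through the causal path $\psi(T)\leftarrow T_C \rightarrow Y$, so that back-door adjustment on $X$ is valid for $\psi(T)$: $\mathbb{E}(Y\mid \mathrm{do}(\psi(T)), X=x) = \mathbb{E}(Y\mid \psi(T), X=x)$. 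Finally I would check that the resulting CATE $\tau(\psi(T),\psi(T'),x)$ depends on $T,T'$ only through $T_C,T_C'$ and reproduces $\tau(T_C,T_C',x)$, so the unbiasedness criterion
$$\int \left(\tau(T_C,T_C',X) - \tau(\psi(T),\psi(T'),X)\right)^2 P(X)\,dX = 0$$
holds for all $T\in\mathcal{T}_{T_C}$, $T'\in\mathcal{T}_{T_C'}$.

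For \emph{necessity}, I would argue by contrapositive: if a representation $\psi(T)$ retains some information about $T_{nC}$, then there is a model consistent with the observed joint distribution $P(Y,X,\psi(T))$ for which the estimated effect is biased. The natural move is to lift the counterexample from the proof of Theorem~\ref{theorem:back door}: since $\psi(T)$ is informative about $T_{nC}$ and $T_{nC}=h(X,\epsilon_{T_{nC}})$ makes $T_{nC}$ a proxy for the confounder $X$, one can construct two outcome models that agree on $\mathbb{E}(Y\mid \psi(T),X)$ — one using $\psi(T)$ only through its $T_C$-content and one that additionally loads on the $T_{nC}$-content while compensating through the coefficient on $X$ (exploiting $T_{nC}=\beta X + \epsilon_{T_{nC}}$) — yet which disagree on $\tau$ once we intervene and break the $T_{nC}$–$X$ link. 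This shows $\mathbb{E}(Y\mid\psi(T),X)$ fails to identify $\mathbb{E}(Y\mid\mathrm{do}(T_C),X)$, establishing bias.

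The main obstacle is making the phrase \emph{``contains no information about the non-causal latents''} precise enough to support the necessity direction cleanly: one has to rule out degenerate cases where $\psi(T)$ technically depends on $T_{nC}$ but only through functions that are themselves determined by $T_C$, and one has to be careful that the constructed alternative outcome model is genuinely a valid SCM consistent with the same interventional semantics, not merely the same observational law. I would handle this by formalizing ``no information'' as $\psi(T) \perp\!\!\!\perp T_{nC} \mid T_C$ together with $\psi$ being a measurable function of $T$, and for the converse assuming the non-degenerate case where the conditional law of $\psi(T)$ given $T_C$ still varies with $T_{nC}$ on a set of positive probability; the rest is then a direct adaptation of the Gaussian construction already used for Theorem~\ref{theorem:back door}. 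The sufficiency direction, by contrast, is essentially a d-separation argument plus back-door adjustment and should be routine.
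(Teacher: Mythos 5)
Your sufficiency direction is essentially the paper's: a representation with no $T_{nC}$-information is constant on each fibre $\mathcal{T}_{T_C}$, hence a reparametrization of $T_C$ (the paper states this directly; your added d-separation/back-door step just makes explicit why a reparametrization of $T_C$ preserves the CATE). The necessity direction is where you genuinely diverge. The paper does \emph{not} argue by contrapositive or reuse the Theorem~\ref{theorem:back door} construction: it works directly from the definition of unbiasedness, specializing the integral condition to the case $T_C = T_C'$, i.e.\ to pairs $T, T' \in \mathcal{T}_{T_C}$ with $T \neq T'$. Since $\tau(T_C, T_C, X) = 0$, the integrand reduces to $\tau(T,T',X)^2$, and positivity of $P(X)$ forces $\mathbb{E}(Y \mid \psi(T), X) = \mathbb{E}(Y \mid \psi(T'), X)$ for all $X$ and all $T, T'$ in the same fibre; the paper then concludes $\psi$ must collapse each fibre to a point, i.e.\ discard $T_{nC}$. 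This direct route is shorter and fully non-parametric --- it needs no linear-Gaussian model, no construction of an alternative SCM, and no non-degeneracy caveat of the kind you (rightly) worry about. Your contrapositive route buys an explicit \emph{mechanism} for the bias (the $T_{nC}$--$X$ proxy effect), but it inherits the burden of showing the Gaussian construction generalizes to arbitrary $h$ and arbitrary $\psi$ retaining $T_{nC}$-information, which is substantially more work than the paper's two-line specialization of the unbiasedness integral. To be fair, the paper's final inference --- from ``$\psi(T)$ and $\psi(T')$ are interventionally equivalent for $Y$'' to ``$\psi(T) = \psi(T')$'' --- itself requires an implicit injectivity assumption on $\psi(t) \mapsto \mathbb{E}(Y \mid \psi(T)=\psi(t), X=\cdot)$, so your instinct that the phrase ``contains no information'' needs formalizing is well placed; but the intended argument is the direct one, and you should at least present that specialization of the integral condition before (or instead of) the counterexample construction.
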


\begin{proof}
Assume first that $\psi(T)$ contains no information about $T_{nC}$. Then it must map all $T\in\mathcal{T}_{T_C}$ to the same value. That is, $\psi(T)$ is just a reparametrization of $T_C$, as $T_C$ are in one-to-one correspondence with $\mathcal{T}_{T_C}$. This moreover means that $\psi(.)$ preserves interventions on $T_C$, which implies it preserves CATE. 

To show the other direction, that unbiased CATE implies $\psi(T)$ contains no information about $T_{nC}$, consider the following. For the CATE to be unbiased we require that
$$\int\left(\tau(T_C, T_C', X) - \tau(T, T', X) \right)^2 P(X)dX = 0, \quad\forall T\in\mathcal{T}_{T_C} \text{ and } T'\in\mathcal{T}_{T_C'}.$$

In particular, this means that for $T,T'\in\mathcal{T}_{T_C}$ with $T\neq T'$, we have that:

$$ 
\begin{aligned}
0&=\int\left(\tau(T_C, T_C, X) - \tau(T, T', X) \right)^2 P(X)dX \\
&=\int\left(\tau(T, T', X) \right)^2 P(X)dX 
\end{aligned}
$$
As all terms in the integral are positive, for it to be equal to zero we must have each term equal to zero. But $P(X)$ is positive on its support set, hence we have that for all $X$
$$ 
\begin{aligned}
0 &= \tau(T, T', X)
= \mathbb{E}(Y | \psi(T),X) - \mathbb{E}(Y |\psi(T'),X) \\
&\implies \mathbb{E}(Y | \psi(T),X) = \mathbb{E}(Y |\psi(T'),X), \text{  } \forall X \text{ and } T,T'\in\mathcal{T}_{T_C}
\end{aligned}
$$
This tells us that $\psi(T)$ and $\psi(T')$ are interventionally equivalent from the point of view of $Y$. As the only difference between $T$ and $T'$ are their non-causal latents, then the representation $\psi(.)$ must map all $T\in\mathcal{T}_{T_C}$ to the same value. Hence it must disregard information about $T_{nC}.$ 
\end{proof}

\section{A contrastive algorithm for learning causally relevant treatment representations}
\label{sec:method}

How do we learn a representation that has no information about $T_{nC}$? To build intuition, consider the structural equations from Section~\ref{sec:problem}, with $f(.)$ in $Y=f(T_C, X)$ an invertible function. Suppose we have two data points where the $X$ and $Y$ values are the same, but the $T$'s are different: $[T,x,y], [T',x,y]$. As $Y=f(T_C, X)$, we have that $f(T_C, C)=f(T_C', X)$. As this function is invertible, the causal components of $T$ and $T'$ are the same. However, data points where the $X$ values are the same, but the $Y$ values are different must have different causal components. 

This observation suggests a contrastive algorithm with positive and negative pairs as below should push $T$ with similar $T_C$ together, and different $T_C$ apart \cite{oord2018representation,tingey2021contrastive}.

\textbf{Positive pairs}: $[T,x,y], [T',x,y]$ such that $T \neq T', \text{ and } X = X', \text{ and } Y = Y'$ \\
\textbf{Negative pairs}: $[T,x,y], [T',x,y]$ such that $T \neq T', \text{ and } X = X', \text{ and } Y \neq Y'$

We now show this provably \emph{block identifies} the causal components of $T$. The resulting representation $\psi(T)$ contains all and only information about $T_C$: there exists an invertible $\phi$: $\psi(T)=\phi(T_C)$.

\begin{theorem}
Assume a structural causal model with DAG from Figure~\ref{fig:DAG} and equations $X=l(\epsilon_X), T_C=g(X, \epsilon_{T_C})$, $T_{nC}=h(X, \epsilon_{T_{nC}})$, $T=m(T_C, T_{nC})$ and $Y=f(T_C, X)$, with all functions smooth and invertible with smooth inverses, with noise terms drawn i.i.d. $\epsilon_i\sim P(\epsilon_i)$ from smooth distributions that have $P(\epsilon_i)>0$ almost everywhere. Then the contrastive approach outlined above yields a representation of $T$ that block-identifies the causal latents. 
\end{theorem}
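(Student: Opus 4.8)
The plan is to follow the standard identifiability recipe for contrastive self-supervised learning (the ``content--style'' block-identification template): first characterise the global optimum of the contrastive objective, then use the positive pairs to show $\psi$ factors through $T_C$, and finally use the negative pairs to show the induced map on $T_C$ is invertible.

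First I would pin down what a minimiser looks like. Writing the population contrastive (InfoNCE) objective, in the limit of infinitely many negatives, as the sum of an \emph{alignment} term $\mathbb{E}_{\mathrm{pos}}\,\|\psi(T)-\psi(T')\|^2$ and a \emph{uniformity} term that penalises $\psi$ for not spreading its output mass out, a global optimiser of this objective has alignment term equal to zero and uniformity term minimised, i.e.\ the pushforward $\psi_{\#}P(T)$ is the corresponding maximum-entropy distribution on the output space. I would also assume the output dimension is at least $\dim(T_C)$ so this optimum is realisable by a $\psi$ of the claimed form.

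Next comes the \textbf{positive-pair step}. A positive pair shares $(X,Y)$; since $Y=f(T_C,X)$ and $f$ is invertible, hence injective in its first slot for each fixed $X$, sharing $(X,Y)$ forces $T_C=T_C'$. Conversely --- and this is where the diffeomorphism and noise-positivity hypotheses are used --- for a fixed $t_C$ and any $x$ in the support of $X$, the conditional support of $T_{nC}$ given $X=x$ is the diffeomorphic image of a full-support noise variable, $T_C\perp T_{nC}\mid X$, so \emph{every} pair $T,T'\in\mathcal{T}_{t_C}$ arises as a positive pair with positive density. Alignment $=0$ then forces $\psi(m(t_C,t_{nC}))=\psi(m(t_C,t_{nC}'))$ for almost every $t_{nC},t_{nC}'$, and by continuity of $\psi$ and $m$ for all of them. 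Hence $\psi\circ m$ does not depend on its second argument, so there is a smooth $\phi$ with $\psi(T)=\phi(T_C)$.

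Then the \textbf{negative-pair step}, which is the crux. I must show $\phi$ is injective on $\mathrm{supp}(T_C)$; together with the previous step and the inverse function theorem this gives $\psi(T)=\phi(T_C)$ with $\phi$ a diffeomorphism onto its image, which is exactly block-identification. Suppose instead $\phi(t_C)=\phi(\tilde t_C)$ for some $t_C\neq\tilde t_C$ in the support. Using positivity of the noise densities again, pick any $x$ in the support of $X$; then $Y=f(t_C,x)\neq f(\tilde t_C,x)=\tilde Y$ by invertibility of $f$, so the corresponding $T,\tilde T$ form a negative pair of positive probability with $\psi(T)=\phi(t_C)=\phi(\tilde t_C)=\psi(\tilde T)$. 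Such a $\psi$ cannot be a minimiser: since $t_C\neq\tilde t_C$ indexes \emph{different fibers} $\mathcal{T}_{t_C},\mathcal{T}_{\tilde t_C}$, one can locally perturb $\psi$ so as to separate these collapsed outputs, strictly decreasing the uniformity (negative-pair) term while leaving the alignment term at zero --- a contradiction. Equivalently, invoking the known characterisation of the uniformity minimiser as the uniform distribution on the sphere together with $\psi_{\#}P(T)=\phi_{\#}P(T_C)$, non-injectivity of a smooth $\phi$ on a full-dimensional support is incompatible with this atomless, full-dimensional target, forcing the Jacobian of $\phi$ to be nonsingular almost everywhere. Either way, $\phi$ is injective.

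The main obstacle is making this injectivity / ``no-collapse'' argument fully rigorous: precisely, showing a global minimiser of the contrastive loss cannot merge distinct $T_C$ values. This requires a precise population objective (InfoNCE with infinitely many negatives, plus a normalisation constraint on $\psi$ such as unit-norm outputs so that ``maximum entropy'' is well defined) and either a careful perturbation argument or an appeal to the Wang--Isola characterisation of the uniformity minimiser, and each only bites once the output dimension is assumed to be at least $\dim(T_C)$. Everything else --- the support bookkeeping that positive pairs connect whole fibers and negative pairs realise every pair of distinct $T_C$, and the smoothness of $\phi$ and $\phi^{-1}$ --- is routine given the diffeomorphism and noise-positivity assumptions, so I would relegate those parts to mechanical verification.
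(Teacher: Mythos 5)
Your proposal is correct in substance but takes a genuinely different route from the paper. The paper's proof is short: it verifies the hypotheses of Theorem 4.2 of von K\"ugelgen et al.\ (2021) --- namely, that pairing samples with identical $(X,Y)$ yields ``augmentations'' of $T$ that leave $T_C$ invariant (because $Y=f(T_C,X)$ with $f$ invertible in its first slot forces $T_C=T_C'$) while necessarily varying $T_{nC}$ (because $\epsilon_{T_{nC}}$ has full support, so distinct $T$'s do occur at fixed $(X,Y)$) --- and then invokes that theorem as a black box to conclude block-identifiability. Your positive-pair step is exactly this hypothesis check, so the core causal observation is shared. Where you diverge is that you then re-derive the block-identification machinery itself: the alignment/uniformity decomposition of the population contrastive objective, the argument that zero alignment makes $\psi\circ m$ constant along each fiber $\mathcal{T}_{t_C}$ so that $\psi=\phi(T_C)$, and the no-collapse/uniformity argument forcing $\phi$ to be injective. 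That is essentially the internal proof of the cited theorem. The trade-off: the paper's route is shorter and delegates the delicate optimisation-theoretic step (characterising global minimisers, which requires a normalised output space and an output dimension matched to $\dim(T_C)$) to prior work, at the cost of being a hypothesis-verification rather than a self-contained argument; your route is self-contained and makes explicit the assumptions the citation hides (the precise population objective, the unit-norm or maximum-entropy normalisation, the output-dimension condition), but the injectivity step you flag as the crux is genuinely the hard part and would need the full Wang--Isola-style characterisation to close rigorously. One small point worth noting if you flesh this out: the paper's negatives are conditioned on matching $X$ rather than drawn from the marginal as in the standard setup, so your uniformity argument should confirm that every pair of distinct $t_C,\tilde t_C$ values co-occurs with a common $x$ of positive density, which your full-support assumptions do deliver.
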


\begin{proof}
Theorem 4.2 from \cite{von2021self} can be applied to help us prove that our contrastive learning approach with positive and negative pairs as above yields a treatment representation that identifies the latents in $T$ that $Y$ causally depends on. This Theorem states that when we have data involving two classes of variables, if we can create pairs of data points with one of the pair being the original view 
and the other an augmented view, such that a subset of one class is different to the original view, then we can block identify the class of variables that remains the same. The theorem holds as long as the underlying data generating process consists of smooth, invertible functions with smooth inverses, and smooth distributions that are non-zero almost everywhere.

We are going to use the above described theorems to prove we can block identify $T_C$. To do this, we need to show that we can take an observation $T=(T_C , T_{nC})$ and ``augment'' it to get $(T_C , T_{nC}')$, where $T_C$ is the same but (possibly some subset of) $T_{nC}$ is not.

Consider two data points where the $X$ and $Y$ values are the same, but the $T$'s are possibly different: $[T,x,y], [T',x,y]$. We have that $y=f(T_C, x) = f(T_C', x)$. As $f$ is invertible we have that $T_C=T_C'$. What this means is that the causally relevant components of $T$ and $T'$ are the same when the values of $Y$ and $X$ are the same. But we need to also show that our augmentations have different $T_{nC}$ components. That is, these augmentations leave $T_C$  invariant, but change (some subset of) $T_{nC}$. 

If there exists different $T, T'$ that occur with the same values of $X$ and $Y$, then $T_{nC}$ must be different, as $T_C$ is the same. But does there exist at least two different $T$’s for some values of $X$ and $Y$? If there doesn't then this means that $T_{nC}$ only depends on $X$ and not the noise term $\epsilon_{T_{nC}}$, which is a contradiction as we assumed at the start that $P(\epsilon_{T_{nC}})$ has non-trivial support. Thus choosing data augmentations in this fashion ensures $T_C$ is invariant between augmentations, but $T_{nC}$ is not. Hence we can apply Theorem 4.1 from \cite{von2021self}to conclude the proof.
\end{proof}

Given high-dimensional covariates $X$ and continuous outcome $Y$, the approach to constructing positive pairs from the start of this section is impractical. Instead of demanding equality $X=X'$ and $Y=Y'$ between samples to find positive pairs, one could instead impose thresholds $\delta, \epsilon$ and consider $X,X'$ and $Y,Y'$ ``close'' if $|X-X'| \leq \delta$ and $|Y-Y'|\leq \epsilon$. Additionally, one could also first learn a low-dimensional representation $g(.)$ of $X$ and consider $X,X'$ close if $|g(X)-g(X')| \leq \delta$. Indeed, for continuous $g(.)$, if $g(X),g(X')$ are close, so too are $X,X'$. In this setting, samples $[X, T, Y], [X', T', Y']$ with: $|g(X)-g(X')| \leq \delta$ and $|Y-Y'|\leq \epsilon$ also have similar $T_C$ and $T_C'$. Indeed we have $|f(T_C, X)-f(T_C',X')|=|Y-Y'|\leq \epsilon$. For continuous $g(.)$ with $|g(X)-g(X')| \leq \delta$, there exists a $\rho$ such that we have $X\approx X'+\rho$. $f(T_C',X')=f(T_C',X+\rho)\approx f(T_C',X)$ by Taylor expanding smooth $f(.)$ with small $\rho$. This implies $|f(T_C, X)-f(T_C',X)|\epsilon, \forall X$. For smooth $f(.)$ we have that $T_C$ and $T_C'$ are close. If, instead we had $|Y-Y'|>\epsilon$, then $T_C$ and $T_C'$ would not be close, which provide negative samples. Hence a contrastive approach with such positive and negative pairs should still intuitively push $T$'s with similar $T_C$'s together, and dissimilar $T_C$'s apart. 

\begin{algorithm2e}[t]
 \DontPrintSemicolon
    \KwInput{Dataset $\{(x, t, y)\}$, representation network $\Phi_{\theta}$ parameterised by $\theta$, representation network $g(.)$ and threshold $\delta$ for $X$, effect threshold $\epsilon$ for $Y$.}
\KwOutput{Learned representation network $\Phi_{\theta}$.}
\For{ each sample $(x, t, y)$:}{
construct \textbf{positive pair}: $(x', t', y')$ with similar $x,x'$: $|g(x)-g(x')| \leq \delta$, and similar $y, y'$: $|y-y'|\leq \epsilon$ \\
construct \textbf{negative pair}: $(x', t', y')$) with similar $x,x'$: $|g(x)-g(x')| \leq \delta$, and dissimilar $y, y'$: $|y-y'|> \epsilon$}
Compute contrastive loss with these positive and negative pairs and update $\theta$ using SGD \\
\Return{$\Phi_{\theta}$}
\caption{Practical contrastive algorithm for representations of high-dimensional treatments.}
\label{alg:constrastive}
\end{algorithm2e}

Algorithm~\ref{alg:constrastive} describes this practical contrastive approach to learn representations of high-dimensional treatments, which we empirically validate on synthetic and real data in Section~\ref{sec:experiments}

\section{Related work}
\label{sec:related}
While many works have approached the task of invariance and disentanglement through contrastive learning (for example \cite{NEURIPS2021_97416ac0,NEURIPS2021_b6cda17a,10415220}), few, to the best of our knowledge, have looked into disentangling causal and non-causal components for causal outcome estimation.
Previous work has investigated the estimation of causal effects from high-dimensional treatments. \cite{kaddour2021causal} introduced `Structured Intervention Networks,' an approach which uses representation learning and alternating gradient descent in a semi-parametric model, known as the generalised Robinson decomposition, to estimate causal effects from high-dimensional, structured treatments. \cite{nabi2022semiparametric} also explores a semi-parametric approach to the problem, casting the problem as a generalisation of sufficient dimension reduction using influence functions in order to estimate the causal effect. The authors only consider the average treatment effect (ATE) as opposed to the more general conditional average treatment effect (CATE). \cite{harada2021graphite} introduced `GraphITE,' a method for estimating causal effects when treatments are graphs. To learn a representation of the treatment, the authors use a regularization term using the Hilbert-Schmidt Information Criterion (HSIC)---which introduces high computational cost---but do not prove it correctly identifies relevant causal latents. Finally, ~\cite{pryzant2020causal} investigate the causal impact of text attributes, where treatments can be mixtures of causal and non-causal latents from a text. Given some assumptions they bound the bias in causal effect estimation using the full text, but they don't provide way to remove non-causal latents, as we do here.
In contrast to the above work, our approach is fully non-parametric and provides theoretical guarantees that we have correctly identified the causally relevant latents.

\section{Experiments}
\label{sec:experiments}
The contrastive approach presented in Section \ref{sec:method} aims to make a effect estimation model more robust to non-causal information present in high-dimensional treatments. Non-causal information presents a crucial risk to machine learning models.
A model fails to discard non-causal information due to two types of errors: irreducible and reducible errors. In other words, error due imperfect information among covariates and treatment; or due to the inability of the learning mechanism to model the problem correctly.
Regardless of the error type, causal models should discard non-causal information.


\textbf{Datasets}
A common characteristic among each of the datasets used in the experiments is that, similarly to Fig. \ref{fig:DAG}, multidimensional treatments are constructed from causal and non-causal information. The goal is to evaluate that the model is able to discard non-causal information and retain causal information.
To add complexity through irreducible error, we use a \textbf{Synthetic dataset}, as in Fig. \ref{fig:DAG}. This synthetic dataset has 1000 samples (70\% for training and 30\% for evaluation); the treatment variable has 10 dimensions, 5 are causal and 5 are non-causal, both highly correlated with the covariates; the outcome is causally determined by the covariates, the causal part of the treatment and random noise.
On the other hand, to introduce complexity through reducible error we use the \textbf{Molecule dataset} \cite{ramakrishnan2014quantum, weinstein2013cancer} and the \textbf{Coat recommender dataset} \cite{pmlr-v48-schnabel16}. These two datasets have more complex causal relations than the Synthetic dataset and large part of the error should be reducible by the model.
See Appendix \ref{appendix:dataset}.

\textbf{Models}
The contrastive method is applied to a classical CATE model (see Fig. \ref{fig:model_dag}). The contrastive loss chosen en these experiments is the Triplet loss \cite{schroff2015facenet}. Positive and negative pairs are selected using a simple clustering method, referred as $g$ in Section \ref{sec:method}, where each component of the variable is bucketed, thus converting continuous variables into discrete ones where the method described is easily applicable.
The \textbf{contrastive CATE model} is compared to two baselines; the exact same \textbf{CATE model} without the contrastive method and the \textbf{SIN model} from \cite{kaddour2021causal}.
The CATE model, with and without the contrastive loss, is implemented as a linear model for the first set of experiments. As a representation of the treatment is needed to compute the contrastive loss, this is computed by applying the treatment weights of the model onto the treatment, and the outcome of this operation will be the representation used. For the second set of experiments, the CATE model is implemented as a Neural Network with treatment and covariate branches producing their respective representations.

\textbf{Evaluation metrics}
In order to measure the robustness introduced by the contrastive approach, the first thing to show is that the models have learnt to solve the given problem under no perturbations. For this purpose we include the mean absolute error (MAE) and root mean squared error (RMSE) for unperturbed test data.
Once all models have learnt to solve the task, a key aspect to measure is the ability of a model to ignore non-causal information present in the treatment variable. To this purpose, the experiments below use Precision in Estimation of Heterogeneous Effect (PEHE) \cite{hill2011bayesian}. In essence, PEHE computes the RMSE between the effects of two treatments $(t, t')$
\begin{equation}
    \sqrt{\frac{1}{n}\sum{\biggl[\Bigl(f\left(x, t\right) - f\left(x, t'\right)\Bigl) - \left(y - y'\right)\biggl]^2}},
\end{equation}
where $f$ is the model and $y,y'$ are the true outcomes. When $t$ and $t'$ come from the same $t_C$ but with different $t_{nC}$, this metric provides a measure of how robust the model is to changes in non-causal information, thus measuring the degree to which the model ignores non-causal information.

\subsection{Irreducible error}
A common source of error in ML models is due to the lack of information in their inputs. Problems with imperfect information can make the model rely on correlations instead of the true causal relations between treatment, covariates and outcome.
This experiment aims to study if the contrastive method proposed makes the model more robust to this kind of errors. To this end, the Synthetic dataset is perturbed by adding additional noise to the outcome before training. The added noise comes from a normal distribution with mean zero and its standard deviation increases linearly in steps of $0.1$ starting at $0.0$ up to $1.0$.
Note that due to the learning mechanism used, a model may incorrectly pick on correlations between the treatment and covariates, even without intervening these variables.
\begin{figure}[!ht]
    \centering
    \begin{subfigure}[b]{0.35\textwidth}
        \centering
        \resizebox{0.7\columnwidth}{!}{%
        \begin{tikzpicture}[node distance={20mm}, thick]
            \node[model] (nn) {NN};
            \node[model] (xn) [above right of=nn] {x-NN};
            \node[model] (tn) [above left of=nn] {t-NN};
            \node[var] (x) [above of=xn] {$x$};
            \node[var] (t) [above of=tn]  {$t$};
            \node[var] (y) [below of=nn]  {$y$};
            \draw[->] (x) -- (xn);
            \draw[->] (t) -- (tn);
            \draw[->] (tn) -- node[text width=-0.3cm,midway,above] {$h_t$} (nn);
            \draw[->] (xn) -- node[text width=0.8cm,midway,above] {$h_x$} (nn);
            \draw[->] (nn) -- (y);
        \end{tikzpicture}%
        }
        \caption{CATE model}
        \label{fig:model_dag}
    \end{subfigure}
    \begin{subfigure}[b]{0.4\textwidth}
        \centering
        \includegraphics[scale=0.45]{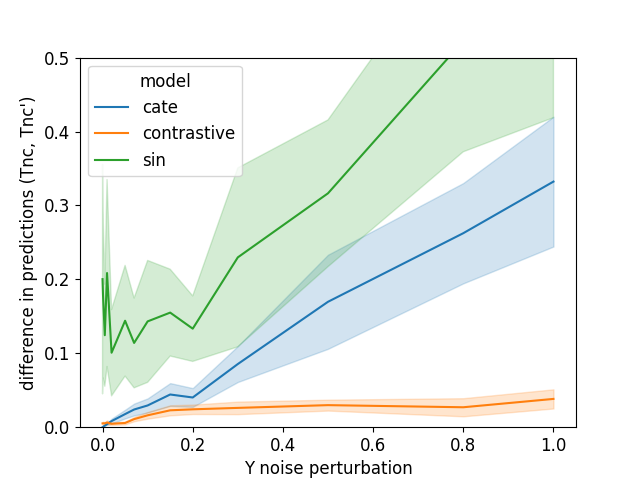}
        \caption{Performance under perturbations to y}
        \label{fig:pehe_contrastive_toy}
    \end{subfigure}
\caption{}
\label{fig:toy}
\end{figure}

An ideal model would predict the same outcome regardless of the non-causal information in the treatment ($t_{nC}$), since this information does not causally influence the outcome.
What the experiment in Fig. \ref{fig:pehe_contrastive_toy} shows is the difference in predictions (the effect) between a sample $(x,t,y)$ and a perturbed version of the treatment $(x,t',y)$ where $t'$ only has its $t_{nC'}$ component changed.
We can see that the contrastive method achieves, to a reasonable degree, that effect; but the CATE and SIN models fail to do so.
Note in Table \ref{table:irreducible} that all models learn to solve the task relatively well\footnote{after extensive hyperparameter search on SIN, it does not achieve similar performance to the CATE models}; nevertheless only the contrastive approach is capable to ignore non-causal information from the treatment.
\begin{table}[!ht]
\centering
\caption{Error metrics with their standard error using 10 different seeds for the Synthetic dataset.}
\label{table:irreducible}
\begin{tabular}{ccccccccc}
    \toprule
    \multicolumn{1}{l}{\textbf{Model}} & \textbf{MAE} & \textbf{RMSE} & \textbf{PEHE}\\ \midrule
    \multicolumn{1}{l}{CATE} & $0.59 \pm 0.8$ & $0.74 \pm 1.0$ & $0.07 \pm 0.1$\\
    \multicolumn{1}{l}{SIN} & $0.9 \pm 0.8$ & $1.16 \pm 0.94$ & $0.20 \pm 0.2$\\ \midrule
    \multicolumn{1}{l}{\textbf{Contrastive}} & $0.63 \pm 0.8$ & $0.78 \pm 1.0$ & $\bm{0.01 \pm 0.0}$\\ \bottomrule
\end{tabular}
\end{table}

Table \ref{table:irreducible} shows quantitatively that the PEHE metric is considerably better in the contrastive case; which is an additional indication that the contrastive model is discarding the non-causal components.

\subsection{Reducible error}
Another source of error is in the intrinsic complexity of the problem, the more complex the problem the harder it is to perform well, requiring larger models or larger datasets. Even when the data has the right information to discard the non-causal information, it may be difficult for a learning mechanism to do so.
This experiment tests the ability of each model to discard non-causal information when the irreducible error does not change (it is intrinsic to the data) but where reducible error is introduced in the form of noise on the non-causal information at test time.
\begin{figure}[!ht]
    \centering
    \begin{subfigure}[b]{0.44\textwidth}
      \centering
      \includegraphics[width=\textwidth]{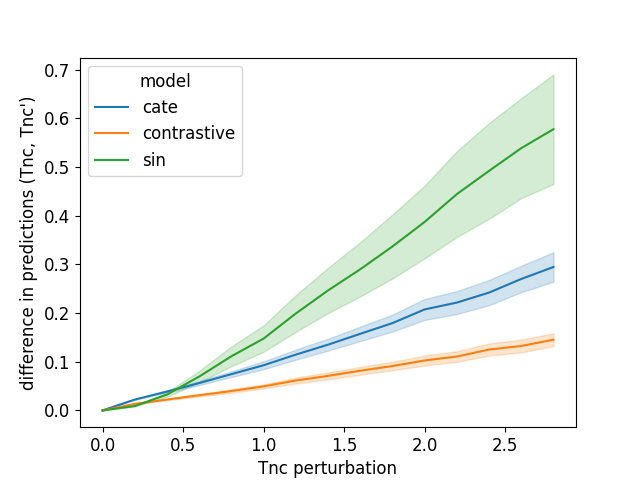}
      \caption{Molecule}
      \label{fig:contrastive_pehe_molecule}
    \end{subfigure}
    \begin{subfigure}[b]{0.44\textwidth}
      \centering
      \includegraphics[width=\textwidth]{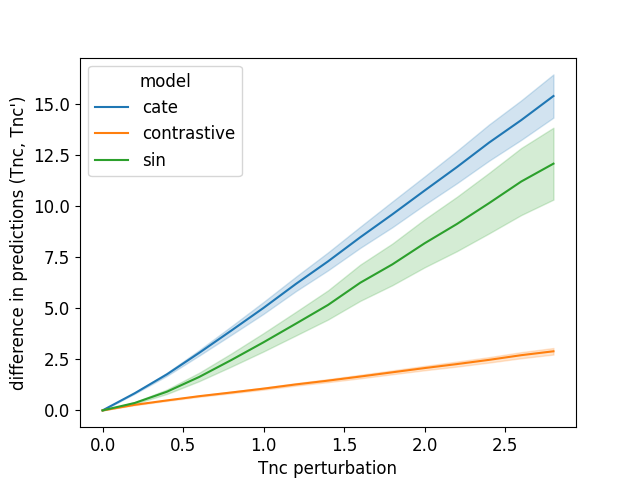}
      \caption{Recommender}
      \label{fig:contrastive_pehe_recommender}
    \end{subfigure}
    \caption{Performance under different perturbations to the non-causal information of the treatment}
    \label{fig:contrastive_pehe}
\end{figure}

\begin{table}[!ht]
\centering
\caption{Error metrics for the Molecule and Recommender datasets. The contrastive approach provides a more robust learning of the causal information than non-contrastive approaches.}
\label{table:reducible}

\begin{tabular}{c|ccc|ccccc}
    \toprule
     & \multicolumn{3}{c}{\textbf{Molecule}} & \multicolumn{3}{c}{\textbf{Recommender}} \\
    \multicolumn{1}{l|}{\textbf{Model}} & \textbf{MAE} & \textbf{RMSE} & \textbf{PEHE} & \textbf{MAE} & \textbf{RMSE} & \textbf{PEHE} \\ \midrule
    \multicolumn{1}{l|}{CATE} & $0.01 \pm 0.0$ & $0.01 \pm 0.0$ & $0.12 \pm 0.1$ & $1.06 \pm 0.0$ & $1.30 \pm 0.0$ & $7.42 \pm 5.2$ \\
    \multicolumn{1}{l|}{SIN} & $0.03 \pm 0.0$ & $0.04 \pm 0.0$ & $0.28 \pm 0.3$ & $1.13 \pm 0.3$ & $1.35 \pm 0.3$ & $5.49 \pm 4.9$ \\ \midrule
    \multicolumn{1}{l|}{\textbf{Contrastive}} & $0.02 \pm 0.0$ & $0.02 \pm 0.0$ & $\bm{0.06 \pm 0.0}$ & $0.99 \pm 0.0$ & $1.2 \pm 0.0$ & $\bm{1.47 \pm 0.9}$ \\ \bottomrule
\end{tabular}
\end{table}

We would expect for the model to be able to completely ignore the non-causal information but as shown in Fig. \ref{fig:contrastive_pehe} the SIN and CATE models are less robust, having larger differences in effect when perturbed the non-causal component of the treatment.
Moreover, Table \ref{table:reducible} again shows how all models have good performance on the problem but that only the contrastive one has low PEHE and thus, it is a more robust method.
This experiment demonstrates that a contrastive loss added to a CATE model promotes learning a causal representation of the treatment.

%

\section{Conclusion}
In this paper we investigated a challenging setting for causal inference with importance for real-world applications: causal effect estimation when treatments are high-dimensional, structured objects. We showed that using the shared structure across different treatments blindly can lead to biased causal effect estimation. To address this challenge we devised a novel contrastive approach that learns a representation of the high-dimensional treatment which provably identifies the underlying causal latents and discards the non-causal ones. We also proved that using this treatment representation provides unbiased causal effect estimation, and empirically validated our results on synthetic and real-world datasets. Lastly, we demonstrated that previous work on causal effect estimation with high-dimensional treatments does not result in unbiased estimation of causal effects. 

\bibliography{ref.bib}

\appendix
\appendixpage

\section{Dataset}
\label{appendix:dataset}

\textbf{Synthetic dataset:} the following (python) pseudo-code describes how data is generated for the Synthetic dataset. The dataset generate for experiments in Sec. \ref{sec:experiments} has 1K samples with 5 causal dimensions and 5 non-causal dimensions.
\begin{lstlisting}[language=Python]
def synthetic_dataset(n: int, y_noise_std: float)
    x = Normal(0, 1, n)
    t_causal = x[:, :causal_dimensions] + Normal(0, 1, n)
    t_non_causal = x[:, causal_dimensions:] + Normal(0, 1, n)
    t = concat(t_causal, t_non_causal, axis=-1)

    mask_t = zeros_like(t)
    mask_t[:, :causal_dimensions] = 1

    y_noise = Normal(0, y_noise_std, n)
    y = sum(mask_t * t + x + y_noise, axis=-1)
    return y
\end{lstlisting}

\textbf{Coat recommender dataset:} the coat recommender dataset is a real-world dataset generated using ratings of users to coats. All components of the treatment/coat are determined causal and additionally we add a 8 dimensional vector to the treatment that is correlated with the covariates/users. The dataset has 10K samples with 33 causal dimensions and 8 non-causal dimensions.

\textbf{Molecule dataset} the molecules dataset is another real-world dataset used in the experiments with 5K samples, 8 causal dimensions and 8 non-causal dimensions. Note that we use the PCA covariates of the dataset and properties as treatments.

\section{Model}
\label{appendix:model}

\textbf{CATE model:} Fig. \ref{fig:model_dag} shows the architecture of the model. The covariates sub-network has one layer and treatment sub-network has two layers of hidden size 32. The common sub-network has two layers of size 64 and 32 each. It is optimized using Adam optimizer, learning rate of $1e-4$ and Huber loss.

\textbf{Contrastive model:} is the same as the CATE model but with a weighted contrastive loss. The triplet loss is weighted by 0.1 for the Synthetic dataset and 1 for the Molecule and Recommender datasets. The margin hyperparameter of the loss is set to 30 for the Synthetic dataset and 100 for the Molecule and Recommender datasets.

\textbf{SIN:} uses the same hyperparameters as in the original paper but with 3 layers and reduced to 32 the hidden size of the sub-networks. Additionally the GNN is replaced with an MLP due to the adaptations of the dataset to have non-causal dimensions.

\section{Compute}
\label{appendix:compute}
All our experiments run on a CPU machine with 16 cores and 64GB of memory.

\end{document}